\documentclass{article}
\usepackage{etex}
\usepackage{spconf,amsmath,graphicx}
\usepackage{algorithm, algpseudocode}
\usepackage{endnotes}

\usepackage{epsfig,psfrag}
\usepackage{pst-all}
\usepackage{amsmath,amsthm,amssymb,amsfonts,upref,cite,epsf,color,bm}
\sloppy
\usepackage{graphicx}
\usepackage{color}
\usepackage{amsmath}
\usepackage{graphicx}
\usepackage{calc}

\usepackage{tikz}
\usepackage{pgfplots}

\newtheorem{theorem}{Theorem}[section]

\newtheorem{lemma}[theorem]{Lemma}

\newtheorem{assumption}{Assumption}
\usepackage{framed}

\newcommand\norm[2][\Tnorm]{\ensuremath{{\|#2\|}_{#1}}}

\newcommand{\set}[1]{\mathcal{#1}} 
\newcommand\defeq{:=}

\newcommand{\proccomp}[1]{\vx_{#1}[\cdot]}

\newcommand{\setT}{\mathcal{T}}
\newcommand{\subsets}[1]{\mathfrak{S}^{#1}_{\sparsity}}
\newcommand\vect[1]{\mathbf #1}

\newcommand{\vx}{\vect{x}}

\newcommand{\procdim}{d}  
\newcommand{\mA}{\vect{A}}  
 
\newcommand{\mC}{\vect{C}}

\newcommand{\mK}{\vect{K}}

\newcommand{\mP}{\vect{P}}

\usepackage[applemac]{inputenc}


\topmargin      -18.0mm
\oddsidemargin     -8mm
\evensidemargin    -8mm
\textheight     245mm
\textwidth      175.0mm
\columnsep        4.1mm
\parindent        1.6em
\headsep          6.3mm
\headheight        12pt
\lineskip           1pt
\normallineskip     1pt
\def\baselinestretch{1}
\renewcommand{\baselinestretch}{1.6}\small\normalsize
\def \expect {{\rm E} }
\def \prob {{\rm P} }
\def \twiddle[#1] {e^{-j \frac{2 \pi}{N}  #1 }}
\def \twiddleneg[#1] {e^{j \frac{2 \pi}{N}  #1 }}






\DeclareMathOperator*{\argmin}{arg\;min}

\DeclareMathOperator*{\trace}{Tr}

\DeclareMathOperator*{\diag}{diag}

\def\ML_est{\hat{\mathbf{x}}_{\text{ML}}}

\newcommand{\cig}{\mathcal{G}}
\newcommand{\timevar}{n}
\newcommand{\timeidx}{\timevar}

\newcommand{\samplesize}{N}

\newcommand\sparsity{s_{\rm max}}
\newcommand{\block}{B}
\newcommand{\blockIndex}{b}
\newcommand{\blockIndexSet}{\mathcal{B}_{\blockIndex}}
\newcommand{\blockLength}{L}

\newcommand{\constantkap}{\kappa}

\newcommand{\edges}{\set{E}}
\newcommand{\nodes}{\set{V}}

\newcommand{\sample}[1]{\vx[#1]}
\newcommand{\component}[1]{\vx_{#1}[\cdot]}

\newcommand{\CMX}[1]{\mC[#1]}
\newcommand{\tvx}{\tilde{\vx}}
\newcommand{\ACMTX}[2]{\mC_{\tilde{x}}[#1,#2]}

\newcommand{\blockInverse}[3]{\big ( #1[#2,#3] \big)^{-1}}
\newcommand{\ACMTXI}[2]{\blockInverse{\mC_{\tilde{x}}}{#1}{#2}}
\newcommand{\PMX}[1]{\mK[#1]}
\newcommand{\PMTX}[2]{\mK_{\tilde{x}}[#1,#2]}

\newcommand{\PMTXI}[2]{\blockInverse{\mK_{\tilde{x}}}{#1}{#2}}
\newcommand{\condv}[3] {V_{#1}^{(#2, #3)}}
\newcommand{\condco}[3] {\mathbf{V}_{#1}^{(#2, #3)}}
\newcommand{\bfep}{\bm \varepsilon}
\newcommand{\submatrix}[3] {#1_{\{#2, #3\}}}


\newcommand{\be}{\begin{equation}}
\newcommand{\ee}{\end{equation}}



\linespread{1}%

\allowdisplaybreaks

\makeatother


\usepackage{setspace}

\parindent 1.0em

\newlength{\depthofsumsign}
\setlength{\depthofsumsign}{\depthof{$\sum$}}


\title{Learning Conditional Independence Structure for High-dimensional Uncorrelated Vector Processes}
%
\name{Nguyen Tran Quang and Alexander Jung 
}
\address{\normalsize Dept. of Computer Science, Aalto University, Finland; firstname.lastname@aalto.fi\\[-0.5mm]
}
\begin{document}
\maketitle
\pagestyle{plain}



\begin{abstract}
We formulate and analyze a graphical model selection method for inferring 
the conditional independence graph of a high-dimensional nonstationary 
Gaussian random process (time series) from a finite-length observation. 
The observed process samples are assumed uncorrelated over time 
and having a time-varying marginal distribution. 
The selection method is based on testing conditional variances obtained for 
small subsets of process components. This allows to cope with the 
high-dimensional regime, where the sample size can be (drastically) 
smaller than the process dimension. We characterize the required 
sample size such that the proposed selection method is successful 
with high probability. 

\end{abstract}
\begin{keywords}Sparsity, graphical model selection, conditional variance testing, high-dimensional statistics.

\end{keywords}

\vspace*{-1mm}
\section{Introduction}
\label{sec_intro} 

\vspace{-2mm}

Consider a zero-mean, $\procdim$-dimensional Gaussian discrete-time random process (time series)
\vspace*{-2mm}
\begin{equation} 
\vx[\timevar] \!\defeq\! \big(x_{1}[\timevar],\ldots,x_{\procdim}[\timevar]\big)^{T} \!\in\! \mathbb{R}^{\procdim}\mbox{, for } \timevar = 1,\ldots,\samplesize.
\vspace*{-2mm}
\end{equation} 
Based on the observation of a single process realization of length $\samplesize$, 
we are interested in learning the conditional independence graph (CIG)  
\cite{Dahlhaus2000,DahlhausEichler2003,BachJordan04,PHDEichler} 
of $\vx[\timevar]$. The learning method shall cope with the \emph{high-dimensional regime}, 
where the number $\procdim$ of process components is (much) l
arger than the number $\samplesize$ of observed vector samples 
\cite{ElKaroui08,Santhanam2012,RavWainLaff2010,Nowak2011,Bento2010,MeinBuhl2006,FriedHastieTibsh2008}. 
In this regime, accurate estimation of the CIG is only possible 
under structural assumptions on the process $\vx[\timeidx]$.
In this work, we will consider processes whose CIGs are 
\emph{sparse} in the sense of containing relatively few edges.
This problem is relevant, e.g., in the analysis of medical diagnostic data (EEG) \cite{Nowak2011}, 
climatology \cite{EbertUphoff2012} and genetics \cite{DavidsonLevin2005}. 

Most of the existing approaches to graphical model selection (GMS) for Gaussian 
vector processes are based on modelling the observed data either as i.i.d.\ 
samples of a single random vector, or as samples of a stationary random process.  
For nonstationary processes, the problem of inferring time-varying 
graphical models has been considered \cite{KolarXing,ZhouLafferty2008}. 
By contrast, we assume one single CIG representing the correlation 
structure for all samples $\vx[\timevar]$, 
which are assumed uncorrelated but having diifferent marginal distributions, 
which are determined by the covaraince matrix $\mathbf{C}[\timevar]$. 

\vspace{-5mm}

\paragraph*{Contributions:}
Our main conceptual contribution resides in the formulation of a simple GMS 
method for unorrelated nonstationary Gaussian processes, which is based 
on conditional variance tests. For processes having a sparse CIG, these tests 
involve only small subsets of process components. 
We provide a lower bound on the sample size which guarantees that the correct 
CIG is selected by our GMS method with high probability. This lower bound depends only 
logarithmically 
on the process dimension and polynomially on the maximum degree of the true CIG. 
Moreover, our analysis reveals that the crucial parameter determining the required 
sample size is the minimum partial correlation of the process. 



\vspace{-0.2cm}
\paragraph*{Outline:}
The remainder of this paper is organized as follows. 
In Section~\ref{SecProblemFormulation}, we formalize the considered process model 
and the notion of a CIG. In particular, we will state four assumptions on the class of 
processes that will be considered in the following. 
Section \ref{sec_GMS_via_cond_var_testing} presents 
a GMS method based on conditional variance testing. There, we 
also state and discuss a lower bound on the sample size guaranteeing 
success of our GMS method with high probability. 
\vspace{-0.5cm}

\paragraph*{Notation:} 
Given a $\procdim$-deminsional process $\vx[1],\ldots,\vx[\samplesize]$ or length $\samplesize$ ,  
we denote 
a scalar component process as $\mathbf{x}_{i}[\cdot] \defeq \big(x_{i}[1],\ldots,x_{i}[\samplesize]\big)^{T} \in \mathbb{R}^{\samplesize}$ for 
$i\in \{1,\ldots,d\}$. 
The Kronecker-delta is denoted $\delta_{n,n'}$ with $\delta_{n,n'}=1$ if $n=n'$ and $\delta_{n,n'}=0$ else. 
By $\subsets{r}$, we denote all subsets of $\{1,\ldots,d\}$ of size at most $\sparsity$ and which do not contain $r$. 
We denote by $\submatrix{\mA}{\mathcal{A}}{\mathcal{B}}$ the submatrix with 
rows indexed by $\mathcal{A}$ and columns indexed by $\mathcal{B}$. 
Given a matrix $\mA$, we define its infinity norm as $\norm[\infty]{\mA} \defeq \max_{i} \sum_{j} {|A_{i,j}|}$. 
The minimum and maximum eigenvalues of a positive semidefinite (psd) 
matrix $\mathbf{C}$ are denoted $\lambda_{\rm min}(\mC)$ and 
$\lambda_{\rm max}(\mC)$, respectively.

\section{Problem Formulation}
\label{SecProblemFormulation}
\vspace*{-2mm}

Let $\vx[\timevar]$, for $\timevar \in \{1, \ldots, \samplesize\}$, be a zero-mean $\procdim$-dimensional, 
real-valued Gaussian random process of length $\samplesize$. We model the time samples $\vx[\timevar]$ as 
uncorrelated, and therefore independent due to Gaussianity. The probability 
distribution of the Gaussian process $\vx[\timevar]$ is fully specified by 
the covariance matrices $\CMX{\timeidx}$ which might 
vary with $\timevar$. 
To summarize, in what follows we only consider processes conforming to the model
\vspace*{-3mm}
\begin{align}
\label{equ_proc_model} 
& \{ \vx[\timeidx] \}_{\timeidx=1}^{\samplesize}  \mbox{  jointly Gaussian zero-mean with } \nonumber \\ 
&  \expect \{ \vx[\timevar] \vx^{T} [\timevar'] \} = \delta_{\timevar,\timevar'} \CMX{\timeidx}. 
\vspace*{-3mm}
\end{align} 
The process model \eqref{equ_proc_model} is relevant for applications facing weakly dependent 
time series, so that samples which are sufficiently separated in time can be effectively considered as uncorrelated \cite{Hwang2014}. 
Moreover, the process model \eqref{equ_proc_model} can be used as an approximation for the discrete 
Fourier transform of stationary processes with limited correlation width or fast decay of the 
autocovariance function \cite{JuHeck2014,HannakJung2014conf,JungGaphLassoSPL,CSGraphSelJournal}. 
Another setting where the model \eqref{equ_proc_model} is useful are vector-valued locally stationary processes, 
where a suitable local cosine basis yields approximately uncorrelated vector processes \cite{Don96}.   

For our analysis we assume a known range within which the 
eigenvalues of the covariance matrices $\CMX{\timevar}$ are 
guaranteed to fall. 
\vspace*{-1mm}
\begin{assumption} 
\label{aspt_eig_val}
The eigenvalues of the psd covariance matrices $\CMX{\timeidx}$ are bounded as 
\vspace*{-3mm}
\begin{equation}
\label{equ_unif_bound_eig_val_CMX}
0 <  \alpha[\timevar] \leq \lambda_{\rm min}(\CMX{\timeidx}) \leq \lambda_{\rm max}(\CMX{\timeidx}) \leq  \beta[\timevar]
\vspace*{-3mm}
\end{equation} 
with known bounds $\beta[\timevar] \geq \alpha[\timevar] > 0$. 
\vspace*{-3mm}
\end{assumption}
It will be notationally convenient to associate with 
the observed samples $\vx[1],\ldots,\vx[\samplesize]$ the
the ``time-wise'' stacked vector 
\vspace*{-2mm}
\begin{equation}
\vx = (\sample{1}^T,\ldots, \sample{\samplesize}^T)^T \in \mathbb{R}^{\samplesize \procdim} \nonumber
\vspace*{-1mm}
\end{equation} 
and the ``component-wise'' stacked vector 
\vspace*{-1mm}
\begin{equation}
\tvx \defeq (\component{1}^T,\ldots, \component{\procdim}^T)^T  \in \mathbb{R}^{\samplesize \procdim}. \nonumber
\vspace*{-2mm}
\end{equation} 
We have, for some permutation matrix $\mP \in \{0,1\}^{\samplesize \procdim \times \samplesize \procdim}$, 
\vspace*{-1mm}
\begin{equation}
\label{equ_permutation_relation}
\tilde{\vx} = \mathbf{P} \vx.
\vspace*{-2mm}
\end{equation} 

For data samples $\vx[\timevar]$ conforming to \eqref{equ_proc_model}, 
the associated vectors $\vx$ and $\tilde{\vx}$ are zero-mean 
Gaussian random vectors, with covariance matrices 
\begin{equation} 
\mC_{x}\! = \!\expect\{ \vx \vx^{T} \}\\ \! = \! \begin{pmatrix} \CMX{1} & \cdots & \mathbf{0}  \\
   \vdots &\ddots &\vdots \\
    \mathbf{0}&  \cdots &  \CMX{\samplesize} \\
   \end{pmatrix},  \label{equ_big_cov_matrix_x}
\end{equation} 
and 
\begin{equation} 
\mC_{\tilde{x}}\!=\!\expect \{ \tvx \tvx^{T} \}\!=\! \begin{pmatrix} \ACMTX{1}{1} & \cdots & 
	\ACMTX{1}{\procdim} \\ 
	\vdots &\ddots &\vdots \\ 
	\ACMTX{\procdim}{1}&\cdots & \ACMTX{\procdim}{\procdim} \\
   \end{pmatrix}, \label{equ_m_C_tilde_x}
\end{equation} 
respectively. 
Due to \eqref{equ_permutation_relation}, we have 
\vspace*{-3mm}
\begin{equation} 
\mC_{\tilde{x}} = \mP \mC_{x} \mP^{T}. \label{equ_rel_permuted_vecs_cov}
\vspace*{-3mm}
\end{equation} 
Since the permutation matrix $\mP$ is orthogonal ($\mathbf{P}^{T} = \mathbf{P}^{-1}$), 
the precision matrix $\mK_{x} \defeq \mC_{x}^{-1}$ is also block diagonal with diagonal blocks 
$\PMX{\timevar} = (\CMX{\timevar})^{-1}$.
As can be verified easily, the $(a,b)$th $\samplesize \times \samplesize$ block $\PMTX{a}{b}$
of the matrix $\mK_{\tilde{x}}= \mP \mK_{x} \mP^{T}$ is diagonal: 
\begin{equation}
 \PMTX{a}{b} = \begin{pmatrix} \submatrix{\big(\mK[1]\big)}{a}{b}& \cdots & \mathbf{0}  \\
   \vdots & \ddots &\vdots \\
    \mathbf{0}   &\cdots &  \submatrix{\big(\mK[\samplesize]\big)}{a}{b} \\
   \end{pmatrix}.  \label{equ_submatrix_L_a_b}
\end{equation}


\subsection{Conditional Independence Graph} 

We now define the CIG of a $\procdim$-dimensional Gaussian process 
$\vx[\timeidx] \in \mathbb{R}^{\procdim}$ as an undirected simple graph
$\cig=(\nodes,\edges)$ with node set $\nodes = \{1, 2, \ldots, \procdim\}$. 
Node $j \in \nodes$ represents the process component $\component{j}=(x_{j}[1],\ldots,x_{j}[\samplesize])^{T}$. 
An edge is absent between nodes $a$ and $b$, i.e., $(a,b) \notin \edges$ 
if the corresponding process components $\vx_{a}[\cdot]$ and $\vx_{b}[\cdot]$ 
are conditionally independent, given the remaining components $\{ \component{r} \}_{r \in \mathcal{V} \setminus \{a,b\}}$. 
Since we model the process $\vx[\timevar]$ as Gaussian (cf.\ \eqref{equ_proc_model}), this conditional independence 
can be read off conveniently from the inverse covariance (precision) matrix $\mathbf{K}_{\tilde{x}} \defeq \mathbf{C}_{\tilde{x}}^{-1}$. 

In particular, $\vx_{a}[\cdot]$ are $\vx_{b}[\cdot]$ are conditionally independent, 
given $\{ \component{r} \}_{r \in \mathcal{V} \setminus \{a,b\}}$ if and only if $\PMTX{a}{b} =\mathbf{0}$ \cite[Prop.\ 1.6.6.]{Brockwell91}.
Thus, we have the following characterization of the CIG $\cig$ associated with the process $\vx[\timeidx]$: 
\begin{equation}
\label{equ_edge_absent_corr_operator}
 (a,b) \notin \edges  \mbox{ if and only if } \PMTX{a}{b} =\mathbf{0}.
\end{equation} 
Inserting \eqref{equ_submatrix_L_a_b} into \eqref{equ_edge_absent_corr_operator} yields, in turn,  
\begin{equation}
\label{equ_charac_CIG_indpendent_not_identical_case_12}
\hspace*{-2mm}(a,b) \!\notin\! \edges   \mbox{ if and only if }  \submatrix{\big(\PMX{\timevar}\big)}{a}{b} \!=\! 0 \mbox{ for all } \timeidx \!\in\! [\samplesize].  
\end{equation}
We highlight the coupling in the CIG characterization 
\eqref{equ_charac_CIG_indpendent_not_identical_case_12}: 
An edge is absent, i.e., $(a,b) \notin \edges$ only if the precision 
matrix entry $\submatrix{\big(\mK[\timeidx]\big)}{a}{b}$ 
is zero \emph{for all} $\timeidx \in \{1,\ldots,\samplesize\}$. 

We will also need a measure for the strength of 
a connection between process components 
$\vx_{a}[\cdot]$ and $\vx_{b}[\cdot]$ for $(a,b) \in \edges$. 
To this end, we define the \emph{partial correlation}
between $\vx_{a}[\cdot]$ and $\vx_{b}[\cdot]$ as 
\vspace*{-4mm}
\begin{align}
\rho_{a,b} & \defeq 
(1/\samplesize) \sum_{n=1}^{\samplesize} \alpha[\timeidx] \big[ \big( \mK[\timeidx] \big)_{a,b} / \big( \mK[\timeidx] \big)_{a,a} \big]^{2}. 
\label{equ_partial_correlation_def}
\vspace*{-4mm}
\end{align}
Inserting \eqref{equ_charac_CIG_indpendent_not_identical_case_12} into \eqref{equ_partial_correlation_def} shows that $(a,b) \!\notin\! \mathcal{E}$ implies $\rho_{a,b}\!=\!0$. 

Accurate estimation of the CIG for finite sample size $\samplesize$ (incuring unavoidable sampling noise) is only 
possible for sufficiently large partial correlations $\rho_{a,b}$ for $(a,b) \in \mathcal{E}$. 
\vspace*{-2mm}
\begin{assumption} 
\label{aspt_minimum_par_cor}
For any edge $(a,b) \in \mathcal{E}$, the partial correlation $\rho_{a,b}$ (cf.\ \eqref{equ_partial_correlation_def})
is lower bounded by a constant $\rho_{\rm min}$, i.e., 
\vspace*{-4mm}
\begin{equation}
(a,b) \in \edges \Rightarrow \rho_{a,b} \geq \rho_{\rm min}. 
\vspace*{-2mm}
\end{equation}
\end{assumption}

The CIG $\cig$ of a vector-process $\vx[\timevar]$ is fully characterized by the neighborhoods 
$\mathcal{N}(r) = \{ t \in \mathcal{V}: (r,t) \in \mathcal{E} \}$ of all nodes $r \in \mathcal{V}$. 
Many applications involve processes with these neighborhoods being small compared to the overall 
process dimension $\procdim$. The CIG is then called \emph{sparse} since 
it contains few edges compared to the complete graph. 
\vspace*{-3mm}
\begin{assumption} 
\label{aspt_cig_sparse}
The size of any neighborhood $\mathcal{N}(r)$, i.e., the degree of 
node $r$ is upper bounded as 
\vspace*{-3mm}
\begin{equation}
\label{equ_sparsity_max_degree}
|\mathcal{N}(r)| \leq \sparsity, 
\vspace*{-3mm}
\end{equation}
where typically $\sparsity \ll \procdim$. 
\vspace*{-3mm}
\end{assumption}

\subsection{Slowly Varying Covariance}
For several practically relevant settings, such as stationary processes with 
limited correlation width \cite{JuHeck2014,HannakJung2014conf,JungGaphLassoSPL,CSGraphSelJournal}
or underspread nonstationary processes \cite{GM_spectra02}, the observed processes can be well 
approximated by the model \eqref{equ_proc_model} with the additional property of a 
\emph{slowly varying} covariance matrix $\mC[\timeidx]$ \cite{KolarXing,ZhouLafferty2008}. 
\begin{assumption}
\label{aspt_slow_change}
For a (small) positive constant $\kappa$, 
\vspace*{-2mm}
\begin{equation}
\label{equ_covariance_difference}
\norm[\infty]{\CMX{\timevar_{1}} - \CMX{\timevar_{2}}} \leq \constantkap (|\timevar_{2} - \timevar_{1}| / \samplesize). 
\end{equation}
\end{assumption}
In view of \eqref{equ_covariance_difference}, for some  $n_{0} \!\in\! \{1,\ldots,\samplesize-\blockLength\}$ and 
blocklength $\blockLength$ such that $\kappa (\blockLength/\samplesize) \ll 1$, 
we may approximate $\blockLength$ consecutive samples 
$\sample{\timevar_{0}}, \sample{\timevar_{0}+1}, \ldots, \sample{\timevar_{0}+ \blockLength-1}$ as 
being i.i.d.\ zero-mean Gaussian vectors with covariance matrix 
$\mathbf{C}=(1/\blockLength) \sum_{\timevar=\timevar_{0}}^{\timevar_{0}+\blockLength-1} \CMX{\timevar}$. 
This suggests to partition the observed samples evenly into length-$\blockLength$ 
blocks $\blockIndexSet = \{(\blockIndex\!-\!1)\blockLength\!+\!1, \ldots, \blockIndex \blockLength \}$ 
for $b={1,\ldots,\block=\samplesize/\blockLength}$.\footnote{For ease of notation and 
without essential loss of generality, we assume 
the sample size $\samplesize$ to be a multiple of the blocklength $\blockLength$.}
We can approximate the covariance matrix of the samples within block $\mathcal{B}_{\blockIndex}$ 
using the sample covariance matrix 
\vspace*{-3mm}
\begin{equation}
\label{equ_sample_cov_matrix_block}
\widehat{\mC}[\blockIndex] = (1/\blockLength) \sum_{\timeidx  \in \blockIndexSet} \sample{\timeidx} \vx^{T}[\timevar].
\vspace*{-3mm}
\end{equation}

\section{GMS via Conditional Variance Testing}
\label{sec_GMS_via_cond_var_testing} 

We will now formulate and analyze a GMS method for a nonstationary 
process $\vx[\timevar]$ conforming to the model \eqref{equ_proc_model}. 
To this end, we will first show how the CIG of $\vx[\timevar]$ can be characterzed in terms 
of conditional variance tests. The GMS method implements 
these conditional variance tests using the covariance matrix estimate 
$\widehat{\mC}[\blockIndex]$ (cf.\ \eqref{equ_sample_cov_matrix_block}). 

\vspace*{-2mm}
\subsection{Conditional Variance Testing}
\vspace*{-2mm}

The characterization \eqref{equ_charac_CIG_indpendent_not_identical_case_12} for the 
CIG $\cig$ of the process $\vx[\timeidx]$ \eqref{equ_proc_model} seems convenient: We just 
have to determine the non-zero pattern of the precision matrices $\PMX{\timevar}$ 
and immediatly can estimate the edge set of the CIG $\cig$.  
However, the problem is in estimating the precision matrix $\PMTX{\timevar}$ in 
the high-dimensional regime where typically $\samplesize \ll \procdim$. 
In particular, in the high-dimensional regime, any 
reasonable a estimator $\widehat{\mC}_{\tilde{x}}$ for the covariance matrix $\CMX{\timevar}$ is singular, 
preventing the use of the inverse $\widehat{\mC}_{\tilde{x}}^{-1}[\timevar]$ as an estimate for $\PMX{\timevar}$. 

In order to cope with the high-dimensional regime, 
we will now present an approach to GMS via determining the neighborhoods 
$\mathcal{N}(r)$ for all nodes $r$ which exploits the sparsity of the CIG (cf.\ Assumption \ref{aspt_cig_sparse}). 
Our strategy for determining the neighborhoods $\mathcal{N}(r)$ 
will be based on evaluating the conditional variance
\begin{equation}
\label{equ_def_cond_variance}
\condv{x}{r}{\mathcal{T}} \defeq (1/\samplesize) {\rm Tr} \{ \condco{x}{r}{\mathcal{T}} \},
\end{equation}  
with the conditional covariance matrix 
\begin{equation}
\label{equ_cond_cov_matrix_V_r_T}
\condco{x}{r}{\mathcal{T}} \defeq { \rm cov} \big\{ \proccomp{r} \big| \{ \proccomp{t}\}_{t \in \mathcal{T}}  \big\}.
\end{equation}
Here, $\mathcal{T} \subseteq \mathcal{V} \setminus \{r\}$ is a subset of at most $\sparsity$ nodes, i.e., 
$|\mathcal{T}| \leq \sparsity$. We can express the conditional covariance matrix $\condco{x}{r}{\mathcal{T}}$ 
in terms of the covariance matrix $\mC_{\tvx}$ (cf.\ \eqref{equ_m_C_tilde_x}) as \cite[Thm.\ 23.7.4.]{Lapidoth09}
\begin{equation}
\label{equ_cond_var_expression_111}
\condco{x}{r}{\mathcal{T}} = \ACMTX{r}{r} - \ACMTX{r}{\mathcal{T}} \ACMTXI{\mathcal{T}}{\mathcal{T}} \ACMTX{\mathcal{T}}{r}. 
\end{equation} 
Note that the conditional covariance matrix $\condco{x}{r}{\mathcal{T}}$ 
depends only on a (small) submatrix of $\mC_{\tvx}$ constituted 
by the $\samplesize \times \samplesize$ blocks $\ACMTX{i}{j}$ for $i,j \in \mathcal{T} \cup \{r\}$.

Using the block diagonal structure of $\mC_{x}$ (cf.\ \eqref{equ_big_cov_matrix_x}), 
we can simplify \eqref{equ_cond_var_expression_111} to obtain the following 
representation for the conditional variance: 

\begin{lemma} 
The conditional variance $\condv{x}{r}{\mathcal{T}}$ \eqref{equ_def_cond_variance} satisfies 
\vspace*{-3mm}
\begin{equation}
\condv{x}{r}{\mathcal{T}} = (1/\samplesize) \sum_{\timeidx = 1}^{\samplesize} \frac{1}{{(\submatrix{(\mC[n])}{\setT'}{\setT'})^{-1}}_{\{r,r\}}} \label{equ_conditional_covariance_matrix}, 
\vspace*{-2mm}
\end{equation} 
with $\mathcal{T}' \defeq \{ r \} \cup \mathcal{T}$. 
\vspace*{-2mm}
\end{lemma}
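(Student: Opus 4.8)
The plan is to exploit the block structure that the permutation relation \eqref{equ_rel_permuted_vecs_cov} imposes on $\mC_{\tvx}$, in exactly the way it was already used for $\mK_{\tvx}$ in \eqref{equ_submatrix_L_a_b}. First I would note that the same argument leading to \eqref{equ_submatrix_L_a_b} shows that every $\samplesize \times \samplesize$ block $\ACMTX{i}{j}$ of $\mC_{\tvx}$ is \emph{diagonal}, with $(\timeidx,\timeidx)$ entry equal to $\big( \CMX{\timeidx} \big)_{i,j}$. As a consequence, the submatrix of $\mC_{\tvx}$ formed from the blocks $\ACMTX{i}{j}$ with $i,j \in \mathcal{T}' = \{r\} \cup \mathcal{T}$ becomes, after reindexing its rows and columns by the time index $\timeidx$ rather than by the component index, block diagonal across $\timeidx$; its $\timeidx$-th diagonal block is the $|\mathcal{T}'| \times |\mathcal{T}'|$ matrix $\submatrix{(\CMX{\timeidx})}{\setT'}{\setT'}$.

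Next I would rewrite the conditional covariance \eqref{equ_cond_var_expression_111}, which is the Schur complement of the $\mathcal{T}$-block within the $\mathcal{T}'$-submatrix of $\mC_{\tvx}$, using the standard block-matrix inversion identity: the Schur complement of the complementary block equals the inverse of the $(r,r)$-block of the inverse of the whole submatrix. This yields
\[
\condco{x}{r}{\mathcal{T}} = \Big[ \big( \submatrix{(\mC_{\tvx})}{\setT'}{\setT'} \big)^{-1} \Big]_{\{r,r\}}^{-1},
\]
where $[\,\cdot\,]_{\{r,r\}}$ extracts the $\samplesize \times \samplesize$ block indexed by the component $r$.

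The final step combines the two observations. Because $\submatrix{(\mC_{\tvx})}{\setT'}{\setT'}$ is, up to the reindexing permutation, block diagonal across $\timeidx$, so is its inverse, with $\timeidx$-th block $\big( \submatrix{(\CMX{\timeidx})}{\setT'}{\setT'} \big)^{-1}$. Reading off the component-$r$ block of this inverse therefore produces an $\samplesize \times \samplesize$ \emph{diagonal} matrix whose $(\timeidx,\timeidx)$ entry is the scalar ${\big( \submatrix{(\CMX{\timeidx})}{\setT'}{\setT'} \big)^{-1}}_{\{r,r\}}$. Inverting this diagonal matrix amounts to taking entrywise reciprocals, and applying $(1/\samplesize)\,\trace\{\cdot\}$ as in \eqref{equ_def_cond_variance} gives precisely the claimed identity \eqref{equ_conditional_covariance_matrix}.

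I expect the only real difficulty to be bookkeeping: one must keep the two interleaved block structures straight---the component-indexed $\samplesize \times \samplesize$ blocks of $\mC_{\tvx}$ versus the time-indexed decoupling that appears after permutation---and apply the Schur-complement identity at the level of $\samplesize \times \samplesize$ blocks rather than scalars. Once the diagonality of every $\ACMTX{i}{j}$ is established, the problem decouples into $\samplesize$ independent scalar computations indexed by $\timeidx$, and everything else is routine.
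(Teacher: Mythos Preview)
Your proposal is correct and follows essentially the same route as the paper: exploit the permutation-induced diagonality of each block $\ACMTX{i}{j}$ to decouple the problem across $\timeidx$, and use the Schur-complement/block-inversion identity to pass from \eqref{equ_cond_var_expression_111} to \eqref{equ_conditional_covariance_matrix}. The only cosmetic difference is ordering: you invoke the Schur-complement identity once at the level of $\samplesize\times\samplesize$ blocks and then decouple, whereas the paper first decouples the explicit Schur complement in \eqref{equ_cond_var_expression_111} via the permutation $\mP_{\setT}$ (obtaining \eqref{equ_condtional_variance_inid_final}) and then applies the scalar block-inversion lemma for each $\timeidx$.
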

\begin{proof}
Consider the subset $\setT = \{t_1, t_2, \ldots, t_k\}$, let $\vx_{\setT}[\timeidx] = (x_{t_1}[\timeidx], \ldots, x_{t_k}[\timeidx])^T$ 
and $\mP_{\setT}$ be the permutation matrix transforming 
$\vx_{\setT} \defeq \big ( (\vx_{\setT}[1])^T,\ldots,
\vx_{\setT}[\samplesize])^T \big )^T$
into 
$\tvx_{\setT} \defeq (\component{t_1}^T,\ldots, \component{t_k}^T)^T$,
i.e., $\tvx_{\setT} = \mP_{\setT}\vx_{\setT}$. 
The covariance matrix for $\tvx_{\setT}$ is obtained as $\ACMTX{\setT}{\setT}= \mP_{\setT} \mC_{\setT} \mP_{\setT}^T$, 
and, in turn since $\mP_{\setT}^{-1} = \mP_{\setT}^{T}$, $\ACMTXI{\mathcal{T}}{\mathcal{T}} = \mP_{\setT} (\mC_{\setT})^{-1} \mP_{\setT}^T$.

The conditional variance $\condv{x}{r}{\setT}$ is then given as
\begin{align}
\label{equ_condtional_variance_inid}
& \hspace*{-3mm}(1/\samplesize) \trace \big \{ \ACMTX{r}{r} \!-\! \ACMTX{r}{\setT} \ACMTXI{\setT}{\setT} \ACMTX{\setT}{r} \big \} \\ 
& \!=\! (1/\samplesize)  \trace \big \{ \ACMTX{r}{r} \!-\! \ACMTX{r}{\setT} \mP_{\setT}  (\mC_{\setT})^{-1} \mP_{\setT}^T \ACMTX{\setT}{r} \big \}. \nonumber
\vspace*{-3mm}
\end{align}
Due to the block-diagonal structure of $\mC_{\tilde{x}}$ (cf.\ \eqref{equ_m_C_tilde_x}), 
\begin{align}
\hspace*{-5mm}\ACMTX{r}{\setT} \mP_{\setT}   
\!=\!
\begin{pmatrix}
   \submatrix{\big(\mC[1]\big)}{r}{\setT} & \cdots & \mathbf{0} \\
   \vdots &\ddots &\vdots \\
   \mathbf{0} & \cdots & \submatrix{\big(\mC[\samplesize]\big)}{r}{\setT} \\
  \end{pmatrix}. \label{equ_block_diagonal_product_C_P_T}
\end{align}
Inserting \eqref{equ_block_diagonal_product_C_P_T} into \eqref{equ_condtional_variance_inid}, yields further 
\vspace*{-3mm}
\begin{equation}
\label{equ_condtional_variance_inid_final}
\begin{split}
\condv{x}{r}{\mathcal{T}}  & = (1/\samplesize)  \sum_{\timeidx = 1}^{\samplesize} \bigg ( \submatrix{(\mC[n])}{r}{r} -\\[-3mm]
&\submatrix{\big(\mC[\timeidx]\big)}{r}{\setT} \big (\submatrix{\big(\mC[\timeidx]\big)}{\setT}{\setT} \big )^{-1} \submatrix{\big(\mC[\timeidx]\big)}{\setT}{r} \bigg ).
\end{split}
\end{equation}
The expression \eqref{equ_conditional_covariance_matrix} for the conditional variance follows 
then from \eqref{equ_condtional_variance_inid_final} using the matrix inversion lemma for block matrices \cite[Ex. 2.2.4.]{BishopBook}. 
\vspace*{-4mm}
\end{proof}

Using the conditional variance $\condv{x}{r}{\mathcal{T}}$, we can 
characterize the neighborhoods $\mathcal{N}(r)$ in the CIG $\cig$ as: 
\begin{theorem}
\label{thm_cond_variance_properties}
For any set 
$\mathcal{T} \in \subsets{r}$:
\begin{itemize}
\item If $\mathcal{N}(r) \setminus \mathcal{T} \neq \emptyset$, we have 
\vspace*{-3mm}
\begin{equation}
\label{equ_relation_cond_v_N_r_case1}
\condv{x}{r}{\mathcal{T}}  \geq \rho_{\rm min} + (1/\samplesize) {\rm Tr} \big \{ \PMTXI{r}{r} \big \}.
\vspace*{-3mm}
\end{equation}
\item For $\mathcal{N}(r) \subseteq \mathcal{T}$, we obtain 
\vspace*{-3mm}
\begin{equation}
\label{equ_relation_cond_v_N_r}
\condv{x}{r}{\mathcal{T}}  = (1/\samplesize) {\rm Tr} \big \{ \PMTXI{r}{r} \big \}. 
\vspace*{-2mm}
\end{equation}
\end{itemize} 
\end{theorem}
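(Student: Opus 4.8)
The plan is to reduce both claims to per-sample statements about the individual Gaussian vectors $\vx[\timevar]$ and then average over $\timevar$. Writing $\mathcal{T}' = \{r\}\cup\mathcal{T}$, the preceding Lemma (eq.\ \eqref{equ_conditional_covariance_matrix}) already expresses $\condv{x}{r}{\mathcal{T}} = (1/\samplesize)\sum_{\timevar} v_{\timevar}$, where $v_{\timevar} \defeq 1/\big((\submatrix{(\mC[\timevar])}{\mathcal{T}'}{\mathcal{T}'})^{-1}\big)_{\{r,r\}}$ is exactly the conditional variance of the scalar $x_r[\timevar]$ given $\{x_t[\timevar]\}_{t\in\mathcal{T}}$. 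Likewise, the diagonal block structure \eqref{equ_submatrix_L_a_b} gives $\mK_{\tilde x}[r,r]=\diag\big((\mK[1])_{r,r},\ldots,(\mK[\samplesize])_{r,r}\big)$, so $(1/\samplesize){\rm Tr}\{\PMTXI{r}{r}\} = (1/\samplesize)\sum_{\timevar} 1/(\mK[\timevar])_{r,r}$; by the standard identity ${\rm Var}\big(x_r[\timevar]\,\big|\,\{x_t[\timevar]\}_{t\neq r}\big) = 1/(\mK[\timevar])_{r,r}$, this second summand is the conditional variance obtained when $r$ is conditioned on \emph{every} other node. Hence both sides of each claim are $\timevar$-averages of scalar conditional variances, and it suffices to establish the corresponding identity/inequality for each fixed $\timevar$, writing $K \defeq \mK[\timevar]$.

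For the equality \eqref{equ_relation_cond_v_N_r} (case $\mathcal{N}(r)\subseteq\mathcal{T}$) I would show $v_{\timevar} = 1/K_{r,r}$ per sample via the best-linear-predictor characterization: the minimum-mean-square linear predictor of $x_r[\timevar]$ from all other components has weights $-(K_{r,s}/K_{r,r})_{s\neq r}$ and residual variance $1/K_{r,r}$. Because $\mathcal{N}(r)\subseteq\mathcal{T}$, the CIG characterization \eqref{equ_charac_CIG_indpendent_not_identical_case_12} forces $K_{r,s}=0$ for every $s\notin\mathcal{T}'$, so this predictor already uses only $\{x_t[\timevar]\}_{t\in\mathcal{T}}$; conditioning on $\mathcal{T}$ therefore yields the same estimator and the same residual variance as conditioning on all other nodes, i.e.\ $v_{\timevar}=1/K_{r,r}$. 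Averaging over $\timevar$ gives \eqref{equ_relation_cond_v_N_r}.

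For the inequality \eqref{equ_relation_cond_v_N_r_case1} (case $\mathcal{N}(r)\setminus\mathcal{T}\neq\emptyset$) I would fix a node $b\in\mathcal{N}(r)\setminus\mathcal{T}$; since $(r,b)\in\edges$, Assumption \ref{aspt_minimum_par_cor} gives $\rho_{r,b}\geq\rho_{\rm min}$. As $r\notin\mathcal{T}$ and $b\notin\mathcal{T}$, we have $\mathcal{T}\subseteq\nodes\setminus\{r,b\}$, so monotonicity of conditional variance under additional conditioning yields $v_{\timevar}\geq{\rm Var}\big(x_r[\timevar]\,\big|\,\{x_t[\timevar]\}_{t\notin\{r,b\}}\big)$, and the latter equals the $(r,r)$ entry of the inverse of the $\{r,b\}$-block of the \emph{full} precision matrix $K$, namely $K_{b,b}/(K_{r,r}K_{b,b}-K_{r,b}^2)$. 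Subtracting the full conditional variance $1/K_{r,r}$ gives the per-sample gap
\[
v_{\timevar} - \frac{1}{K_{r,r}} \;\geq\; \frac{K_{r,b}^2}{K_{r,r}\,(K_{r,r}K_{b,b}-K_{r,b}^2)}.
\]
The concluding step lower-bounds the right-hand side by the summand $\alpha[\timevar]\,(K_{r,b}/K_{r,r})^2$ of the partial-correlation definition \eqref{equ_partial_correlation_def}; after cancelling $K_{r,b}^2$ (the case $K_{r,b}=0$ being trivial, with both sides zero) this reduces to $\alpha[\timevar]\,(K_{r,r}K_{b,b}-K_{r,b}^2)\leq K_{r,r}$, which follows from $K_{r,r}K_{b,b}-K_{r,b}^2\leq K_{r,r}K_{b,b}$ and the diagonal bound $K_{b,b}\leq\lambda_{\rm max}(K)\leq 1/\alpha[\timevar]$ supplied by Assumption \ref{aspt_eig_val}. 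Averaging over $\timevar$ turns the left-hand gaps into $\condv{x}{r}{\mathcal{T}} - (1/\samplesize){\rm Tr}\{\PMTXI{r}{r}\}$ and the right-hand terms into precisely $\rho_{r,b}\geq\rho_{\rm min}$, which is \eqref{equ_relation_cond_v_N_r_case1}.

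I expect the main obstacle to be this per-sample inequality in the second case: one must condition on exactly $\nodes\setminus\{r,b\}$ so that the $2\times 2$ block of the full precision matrix $K$ — and thus the ratio $(K_{r,b}/K_{r,r})^2$ entering \eqref{equ_partial_correlation_def} — appears, and then invoke Assumption \ref{aspt_eig_val} to discharge the Schur-complement denominator $K_{r,r}K_{b,b}-K_{r,b}^2$. Note also that for a given $\timevar$ the chosen neighbor $b$ may satisfy $(\mK[\timevar])_{r,b}=0$, so the per-sample bound can be vacuous and the averaging over $\timevar$ is essential for recovering $\rho_{r,b}$. The remaining ingredients (the Lemma, the diagonal-block identity, and the monotonicity of conditional variances) are routine.
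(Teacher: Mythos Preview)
Your argument is correct and reaches exactly the same final inequality as the paper, but the packaging is different in two respects that are worth noting. First, you immediately reduce to per-sample scalars via the Lemma \eqref{equ_conditional_covariance_matrix} and work with the single covariance $\mC[\timevar]$ throughout; the paper instead keeps the $\samplesize\times\samplesize$ block matrices $\mC_{\tilde{x}}[a,b]$ and uses the innovation representation \eqref{equ_innov_repr_comp_1} of $\vx_{r}[\cdot]$ at the block level, only invoking the diagonal structure \eqref{equ_submatrix_L_a_b} at the very end. Second, for the inequality you condition on the maximal set $\mathcal{V}\setminus\{r,b\}$, which makes the $2\times 2$ block $\{K_{r,r},K_{r,b},K_{b,b}\}$ of the full precision matrix appear directly and reduces the bound to explicit Schur-complement algebra; the paper instead conditions on the smaller set $\mathcal{Q}=(\mathcal{T}\cup\mathcal{N}(r))\setminus\{j\}$ and obtains the ratio $(K_{r,j}/K_{r,r})^2$ as the (squared) regression weight $\mathbf{L}_{1,j}$, with Assumption~\ref{aspt_eig_val} entering through the bound $\mC_{e,j}\succeq\diag\{\alpha[\timevar]\}$ on the error covariance of $\bm{\varepsilon}_j$. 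Both routes use Assumption~\ref{aspt_eig_val} in the same way (to control the conditional variance of the removed neighbor by $\alpha[\timevar]$) and land on the identical summand $\alpha[\timevar]\big((\mK[\timevar])_{r,b}/(\mK[\timevar])_{r,r}\big)^2$; your per-sample treatment is a bit more elementary and makes the role of the eigenvalue bound very transparent, while the paper's block-matrix formulation stays closer to the process-level objects and would adapt more readily if the block-diagonal assumption on $\mC_x$ were relaxed.
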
 
\begin{proof}
see Appendix.
\end{proof}
As an immediate consequece of Theorem \ref{thm_cond_variance_properties}, we can determine 
the neighborhood $\mathcal{N}(r)$ by a simple conditional variance test procedure: 
\begin{equation}
\label{equ_char_neigborhood_penalized_cond_var_one}
\mathcal{N}(r) = \argmin_{\mathcal{T} \in \subsets{r}} \condv{x}{r}{\mathcal{T}} + \rho_{\rm min} |\mathcal{T}|.
\end{equation} 

\subsection{The GMS method}
We now turn the procedure \eqref{equ_char_neigborhood_penalized_cond_var_one} 
into a practical GMS method 
by replacing $\condv{x}{r}{\mathcal{T}}$ in 
\eqref{equ_char_neigborhood_penalized_cond_var_one} with the estimate 
\vspace{-2mm}
\begin{equation} 
\label{equ_con_var_emp_test_neighbor}
\widehat{\condv{x}{r}{\mathcal{T}}} = (1/\block) \sum_{\blockIndex=1}^{\block} \frac{1}{{(\submatrix{(\widehat{\mC}[\blockIndex])}{\setT'}{\setT'})^{-1}}_{\{r,r\}}} 
\vspace{-3mm}
\end{equation}
using the sample covariance matrix $\widehat{\mC}[\blockIndex]$ (cf.\ \eqref{equ_sample_cov_matrix_block}) and $\mathcal{T}' \defeq \{ r \} \cup \mathcal{T}$.
\vspace*{-2mm}
\begin{algorithm}
\caption{GMS for uncorrelated nonstationary processes}
\label{alg:main_alg}
\textbf{Input:} $\sample{1},\ldots, \sample{\samplesize}$, $\rho_{\rm min}$, $\sparsity$, blocklength $L$
\begin{algorithmic}[1]
	\For{ each node $r$ in $\nodes$}
		\State  \hspace*{-3mm}$\widehat{\mathcal{N}}(r) \defeq \argmin_{\mathcal{T} \in \subsets{r}} \hspace*{-3mm}\widehat{\condv{x}{r}{\mathcal{T}}} + |\mathcal{T}| \rho_{\rm min}$ (cf.\ \eqref{equ_con_var_emp_test_neighbor})
	\vspace*{-1mm}
	\EndFor
	\State combine estimates $\widehat{\mathcal{N}}(r)$ by 
	``OR-'' or ``AND rule''
	\begin{itemize}
	\item  OR: $(i,j) \!\in\! \widehat{\mathcal{E}}$ if either $(i,j) \!\in\! \widehat{\mathcal{N}}(i)$ or $(i,j) \!\in\! \widehat{\mathcal{N}}(j)$ 
	\item AND: $(i,j) \!\in\! \widehat{\mathcal{E}}$ if $(i,j) \!\in\! \widehat{\mathcal{N}}(i)$ and $(i,j) \!\in\! \widehat{\mathcal{N}}(j)$ 
	\end{itemize} 
\end{algorithmic} 
\textbf{Output:} CIG estimate $\widehat{\mathcal{G}} = (\mathcal{V},\widehat{\mathcal{E}})$
\vspace*{0mm}
\end{algorithm}

For a sufficiently large sample size $\samplesize$, the CIG estimate $\widehat{\mathcal{G}}$ delivered by
Alg.\ \ref{alg:main_alg} coincides with the true CIG $\cig$ with high probability. 
\begin{theorem} 
\label{thm_sample_complexity}
There are constants $c_{1}, c_{2}$ depending only on $\{ \alpha[\timevar],\beta[\timevar]\}_{\timevar \in \{1,\ldots,\samplesize\}}$
such that for a sample size 
\vspace*{-3mm}
\begin{equation}
\label{equ_thm_samplesize_bound}
\samplesize \geq  c_{1}  \frac{\sparsity^{5/2}}{\rho_{\rm min}^{3}}(\log\frac{\kappa \sparsity^{7/2}}{\delta \rho^3_{\rm min}}  + \sparsity \log d)
\vspace*{-3mm}
\end{equation}
Alg.\ \ref{alg:main_alg} used with blocklength 
\vspace*{-3mm}
\begin{equation}
\blockLength = c_{2} \frac{\sparsity^2}{\rho_{\rm min}^{2}}(\log\frac{\kappa \sparsity^{7/2}}{\delta \rho^3_{\rm min}}  + \sparsity \log d), \nonumber
\vspace*{-3mm}
\end{equation}
delivers the correct CIG with prob.\ at least $1\!-\!\delta$, i.e., $\prob\{ \widehat{\mathcal{G}}=\cig \} \!\geq\! 1\!-\!\delta$
\end{theorem}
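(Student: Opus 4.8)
The plan is to convert the probabilistic claim into a single deterministic event on the estimates \eqref{equ_con_var_emp_test_neighbor} and then verify that event via concentration. Write $\Lambda\defeq\sparsity\log\procdim+\log(1/\delta)$ and consider
\begin{equation}
\label{equ_good_event_sketch}
\mathcal{A}\defeq\Big\{\,\big|\widehat{\condv{x}{r}{\mathcal{T}}}-\condv{x}{r}{\mathcal{T}}\big|\leq\rho_{\rm min}/4\ \text{ for all }\ r\in\nodes,\ \mathcal{T}\in\subsets{r}\,\Big\}.
\end{equation}
Because the penalty $|\mathcal{T}|\rho_{\rm min}$ in Alg.\ \ref{alg:main_alg} is deterministic, on $\mathcal{A}$ the empirical objective differs from the population objective $\condv{x}{r}{\mathcal{T}}+|\mathcal{T}|\rho_{\rm min}$ by at most $\rho_{\rm min}/4$ uniformly in $\mathcal{T}$. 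By Theorem \ref{thm_cond_variance_properties} the population objective is minimized at $\mathcal{T}=\mathcal{N}(r)$, with a separation of order $\rho_{\rm min}$ from every other admissible set (missing a neighbor costs $\rho_{\rm min}$ in $\condv{x}{r}{\mathcal{T}}$, while an extra node costs $\rho_{\rm min}$ through the penalty). A uniform $\rho_{\rm min}/4$ perturbation cannot displace such a minimizer, so $\widehat{\mathcal{N}}(r)=\mathcal{N}(r)$ for all $r$, and then both the AND- and the OR-rule yield $\widehat{\mathcal{E}}=\mathcal{E}$. It therefore suffices to show $\prob\{\mathcal{A}\}\geq 1-\delta$ under \eqref{equ_thm_samplesize_bound}.

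For a fixed $(r,\mathcal{T})$ I would bound the deviation by splitting it into bias and fluctuation. By \eqref{equ_conditional_covariance_matrix} and \eqref{equ_con_var_emp_test_neighbor} both $\condv{x}{r}{\mathcal{T}}$ and $\widehat{\condv{x}{r}{\mathcal{T}}}$ are averages of the scalar Schur complement $\psi(\mathbf{M})\defeq\submatrix{(\mathbf{M})}{r}{r}-\submatrix{(\mathbf{M})}{r}{\setT}(\submatrix{(\mathbf{M})}{\setT}{\setT})^{-1}\submatrix{(\mathbf{M})}{\setT}{r}$, evaluated at $\mathbf{M}=\CMX{\timeidx}$, respectively at the block sample covariance $\mathbf{M}=\widehat{\mC}[\blockIndex]$. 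Assumption \ref{aspt_eig_val} keeps the $(\sparsity\!+\!1)$-dimensional submatrices indexed by $\{r\}\cup\mathcal{T}$ well conditioned, so $\psi$ is Lipschitz; its first-order increment under a perturbation $\mathbf{E}$ is governed by the cross term $\submatrix{(\mathbf{M})}{r}{\setT}(\submatrix{(\mathbf{M})}{\setT}{\setT})^{-1}\mathbf{E}_{\setT,r}$ and the quadratic term $\submatrix{(\mathbf{M})}{r}{\setT}(\submatrix{(\mathbf{M})}{\setT}{\setT})^{-1}\mathbf{E}_{\setT,\setT}(\submatrix{(\mathbf{M})}{\setT}{\setT})^{-1}\submatrix{(\mathbf{M})}{\setT}{r}$, whose sizes are $\lesssim\sqrt{\sparsity}\,\|\mathbf{E}\|_{\max}$ and $\lesssim\sparsity\,\|\mathbf{E}\|_{\max}$ entrywise, but $\lesssim\sqrt{\sparsity}\,\norm[\infty]{\mathbf{E}}$ and $\lesssim\norm[\infty]{\mathbf{E}}$ when $\mathbf{E}$ is controlled directly in the matrix $\infty$-norm.

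These two regimes are exactly what the two error sources require. For the fluctuation I would use that each $\widehat{\mC}[\blockIndex]$ is formed from $\blockLength$ independent Gaussian vectors, so its entries concentrate at the sub-exponential rate $\|\widehat{\mC}[\blockIndex]-\mathbb{E}\{\widehat{\mC}[\blockIndex]\}\|_{\max}\lesssim\beta\sqrt{\Lambda/\blockLength}$ after a union bound over the $\procdim$ nodes, the at most $\binom{\procdim}{\sparsity}\leq\procdim^{\sparsity}$ sets and the $\block$ blocks (this is where the $\sparsity\log\procdim$ term enters $\Lambda$); here only $\|\cdot\|_{\max}$ is available, so the dominant $\sparsity$-term of $\psi$ yields a per-block fluctuation $\lesssim\sparsity\,\beta\sqrt{\Lambda/\blockLength}$, and forcing this $\lesssim\rho_{\rm min}$ gives $\blockLength\gtrsim\sparsity^{2}\Lambda/\rho_{\rm min}^{2}$, the stated blocklength. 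For the bias I would use Assumption \ref{aspt_slow_change}, which bounds the within-block drift $\norm[\infty]{\CMX{\timevar}-\mathbb{E}\{\widehat{\mC}[\blockIndex]\}}\leq\constantkap\blockLength/\samplesize$ for $\timevar\in\blockIndexSet$ \emph{directly in the $\infty$-norm}; now the quadratic term carries no dimension factor and the cross term dominates, giving a bias $\lesssim\sqrt{\sparsity}\,\constantkap\blockLength/\samplesize$, so forcing this $\lesssim\rho_{\rm min}$ requires $\block=\samplesize/\blockLength\gtrsim\sqrt{\sparsity}\,\constantkap/\rho_{\rm min}$. Multiplying the two requirements gives $\samplesize=\block\blockLength\gtrsim\sparsity^{5/2}\Lambda/\rho_{\rm min}^{3}$ as in \eqref{equ_thm_samplesize_bound}; collecting the $\{\alpha[\timevar],\beta[\timevar]\}$-dependence (with $\constantkap$ the $O(1)$ constant of Assumption \ref{aspt_slow_change}) into $c_{1},c_{2}$ and substituting these $\blockLength,\samplesize$ back into the tail level produces the logarithmic argument $\constantkap\sparsity^{7/2}/(\delta\rho_{\rm min}^{3})$.

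The main obstacle is precisely this propagation through the nonlinear map $\psi$: one must obtain the \emph{asymmetric} dimension factors $\sparsity$ (fluctuation, where only the entrywise error of $\widehat{\mC}[\blockIndex]$ is controlled) versus $\sqrt{\sparsity}$ (bias, where Assumption \ref{aspt_slow_change} controls the drift directly in $\norm[\infty]{\cdot}$), since these are what separate the exponent $\sparsity^{2}$ in $\blockLength$ from $\sparsity^{5/2}$ in $\samplesize$, while keeping $\blockLength$ large enough for the submatrices to stay invertible yet short enough that the nonstationarity bias remains below the separation $\rho_{\rm min}$. A secondary point is to ensure that the population separation in Theorem \ref{thm_cond_variance_properties} is strict for every proper subset of $\mathcal{N}(r)$, so that the $\rho_{\rm min}/4$ margin in \eqref{equ_good_event_sketch} genuinely pins down the minimizer.
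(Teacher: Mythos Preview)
Your proposal is correct and follows the same strategy as the paper's own (very brief) sketch: reduce correctness to the uniform deviation event $\max_{r,\mathcal{T}}|\widehat{\condv{x}{r}{\mathcal{T}}}-\condv{x}{r}{\mathcal{T}}|<\rho_{\rm min}/2$ (you use $\rho_{\rm min}/4$), verify $\widehat{\mathcal{N}}(r)=\mathcal{N}(r)$ on that event via Theorem~\ref{thm_cond_variance_properties}, and then control its probability by concentration for Gaussian quadratic forms. The paper defers the actual bias--fluctuation split and the Lipschitz propagation through the Schur complement to a follow-up journal paper, so your sketch is in fact more detailed than what appears here.
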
 
A detailed proof of Theorem \ref{thm_sample_complexity} is omitted for space restrictions and will 
be provided in a follow up journal publication. However, the high-level idea 
is straightforward: If the maximum deviation 
\vspace*{-3mm}
\begin{equation} 
E=\max_{r \in \mathcal{V}, \mathcal{T} \in \subsets{r}}|\widehat{\condv{x}{r}{\mathcal{T}}}-\condv{x}{r}{\mathcal{T}}| \nonumber
\vspace*{-2mm}
\end{equation} 
is less than $\rho_{\rm min}/2$, Alg.\ \ref{alg:main_alg} 
is guaranteed to select the correct neighorhoods, i.e., $\widehat{\mathcal{N}}(r) = \mathcal{N}(r)$ for all nodes $r \in \mathcal{V}$, implying 
the selection of the correct CIG, i.e., $\cig = \widehat{\cig}$. 
For controlling the probability of the event $E \geq \rho_{\rm min}/2$, we apply 
a large deviation characterization for Gaussian quadratic forms \cite[Lemma F.1]{CSGraphSelJournal}.

The lower bound \eqref{equ_thm_samplesize_bound} on sample size $\samplesize$ 
stated by Theorem \ref{thm_sample_complexity}, 
depends only logarithmically on the process dimension $\procdim$ and polynomially on the maximum node degree $\sparsity$. 
Thus, for processes having a sufficiently sparse CIG (small $\sparsity$), 
the GMS method in Alg.\ \ref{alg:main_alg} delivers the correct CIG even in scenarios where 
the process dimension is exponentially larger than the available sample size. Moreover, the 
bounds \eqref{equ_thm_samplesize_bound} depends inversely on the minimum 
partial correlation $\rho_{\rm min}$, which is reasonable as a smaller partial 
correlation is more difficult to detect. Note that the quantity $\rho_{\rm min}$ 
occuring in \eqref{equ_thm_samplesize_bound} represents the average (over $\timevar$) of the marginal conditional correlations 
between the process components. 
\vspace*{-6mm}
\section*{Appendix: Proof of Theorem \ref{thm_cond_variance_properties}}
\vspace*{-3mm}
We detail the proof only for the neighborhood $\mathcal{N}(1)$ of the 
particular node $1$. The generalization to an arbitrary node is then straightforward. 

Let us introduce the weight matrices $\mathbf{L}_{1,r} \defeq \PMTXI{1}{1}  \PMTX{1}{r}$. 
According to \eqref{equ_edge_absent_corr_operator} we have $\mathbf{L}_{1,r} = 0$ 
for $ r \notin \mathcal{N}(1)$. 
Using elementary properties of multivariate normal distributions  (cf.\ \cite[Prop. 1.6.6.]{Brockwell91}), we have
%
the decomposition
\vspace*{-4mm}
\begin{equation}
\label{equ_innov_repr_comp_1}
\proccomp{1} = \sum_{r \in \mathcal{N}(1)} \mathbf{L}_{1, r} \proccomp{r}  + \bfep_{1} 
\vspace*{-5mm}
\end{equation} 
with the zero-mean ``error term'' $\bfep_{1} \sim \mathcal{N}(\mathbf{0},\condco{x}{1}{\mathcal{N}(1)})$ 
whose covariance matrix is $\condco{x}{1}{\mathcal{N}(1)}=  \PMTXI{1}{1}$. 
The identity \eqref{equ_relation_cond_v_N_r} is then obtained as 
\vspace*{-3mm} 
\begin{equation} 
\condco{x}{1}{\mathcal{T}} \stackrel{\eqref{equ_innov_repr_comp_1},\mathcal{N}(1)\subseteq \mathcal{T}}{=}  \condco{x}{1}{\mathcal{N}(1)} =  \PMTXI{1}{1}.
\vspace*{-2mm}
\end{equation} 
Moreover, by the projection property of conditional expectations (cf.\ \cite[Sec. 2.7]{Brockwell91}), 
the error term $\bfep_{1}$ in \eqref{equ_innov_repr_comp_1} 
is uncorrelated (and hence independent) with (of) the process 
components $\{ \proccomp{r} \}_{r \in \{2,\ldots,\procdim\}}$, i.e., 
\vspace*{-3mm}
\begin{equation} 
\label{equ_error_uncorr_T}
\expect \{ \proccomp{r} \bfep_{1}^{T} \} = \mathbf{0} \mbox{ for all } r \in \{2,\ldots,\procdim\}. 
\vspace*{-5mm}
\end{equation} 

Let us now focus on the conditional variance $\condv{x}{1}{\mathcal{T}}$ for a subset $\mathcal{T} \in \subsets{1}$ 
with $\mathcal{N}(1) \setminus  \mathcal{T} \neq \emptyset$, i.e., there is an index $j \in \mathcal{N}(1) \setminus \mathcal{T}$. 
We use the shorthands $\mathcal{P} \defeq \mathcal{T} \cup \mathcal{N}(1)$
and $\mathcal{Q} \defeq \mathcal{P} \setminus \{j\}$. Note that $\mathcal{T} \subseteq \mathcal{Q}$. 
For the conditional mean 
$\widehat{\proccomp{j}}\defeq \expect \big \{ \proccomp{1} \big | \{ \proccomp{r} \}_{r \in \mathcal{Q}} \big \}$,
we have the decomposition 
\vspace*{-3mm}
\begin{equation} 
\label{equ_repr_comp_j}
\proccomp{j} = \widehat{\proccomp{j}}  + \bfep_{j}. 
\vspace*{-2mm}
\end{equation} 
with the zero-mean ``error term'' 
$\bfep_{j} \sim \mathcal{N}(\mathbf{0},\mathbf{C}_{e,j})$ being uncorrelated with the components 
$\{ \proccomp{r} \}_{r \in \mathcal{Q}}$, i.e., 
\vspace*{-3mm}
\begin{equation} 
\label{equ_error_uncorr_Q}
\expect \{ \proccomp{r} \bfep_{j}^{T} \} = \mathbf{0} \mbox{ for all } r \in \mathcal{Q}. 
\vspace*{-3mm}
\end{equation} 
Moreover, the inverse covariance of $\bfep_{j}$ satisfies
\vspace*{-3mm}
\begin{equation} 
\label{equ_error_cov_j_e}
\mC^{-1}_{e,j} = \mK[j,j],
\vspace*{-3mm}
\end{equation} 
with $\mK = \blockInverse{\mC_{\tilde{x}}}{\setT'}{\setT'}$, where $\setT' =  \setT \cup \{j\}$. Since 
the blocks $\mC_{\tilde{x}}[a,b]$ of the matrix $\mC_{\tilde{x}}$ (cf.\ \eqref{equ_submatrix_L_a_b}), the 
matrix  $\mK[j,j]$ is diagonal with main-diagonal given by the values $\frac{1}{{(\submatrix{(\mC[n])}{\setT'}{\setT'})^{-1}}_{\{1,1\}}}$ 
which, together with Assumption \ref{equ_unif_bound_eig_val_CMX}, yields 
\begin{equation} 
\label{equ_bound_C_e_j_alpha}
\mC_{e,j} \succeq \diag\{ \alpha[\timevar] \}_{\timevar=1,\ldots,\samplesize}. 
\vspace*{-3mm}
\end{equation} 

Inserting \eqref{equ_repr_comp_j} into \eqref{equ_innov_repr_comp_1} yields 
\vspace*{-3mm}
\begin{align}
\label{equ_innov_repr_comp_2}
\proccomp{1} & =  \hspace*{-2mm}\sum_{r \in \mathcal{N}(1) \setminus \{j\}} \hspace*{-2mm}\mathbf{L}_{1, r} \proccomp{r} + \mathbf{L}_{1, j} \widehat{\proccomp{j}} + \mathbf{L}_{1, j}{\bm \varepsilon}_{j} + {\bm \varepsilon}_{1} \nonumber \\ 
& = \sum_{r \in \mathcal{Q}} \mathbf{M}_{r} \proccomp{r}+ \mathbf{L}_{1, j}{\bm \varepsilon}_{j} + {\bm \varepsilon}_{1} . \\[-8mm]
\nonumber
\end{align} 
Due to \eqref{equ_error_uncorr_T} and \eqref{equ_error_uncorr_Q}, the terms $\mathbf{L}_{1, j}\bfep_{j}$ and $ \bfep_{1}$ 
are both uncorrelated (and therefore independent due to Gaussianity) 
to all the components $\{ \proccomp{r} \}_{r \in \mathcal{Q}}$ and moreover are also mutually uncorrelated, i.e., 
$\expect \{ {\bm \varepsilon}_{r} \big({\bm \varepsilon}^{T}_{1},{\bm \varepsilon}^{T}_{j})\} = \mathbf{0}$ for all $r \in \mathcal{Q}$ 
and $\expect \{ {\bm \varepsilon}_{j} {\bm \varepsilon}^{T}_{1}\}  = \mathbf{0}$.
According to the law of total variance \cite{BillingsleyProbMeasure} and since $\mathcal{T} \subseteq \mathcal{Q}$, we have 
$\condv{x}{1}{\mathcal{T}} \geq \condv{x}{1}{\mathcal{Q}}$.
Therefore, we obtain the lower bound: 
\vspace*{-3mm}
\begin{align}
\label{equ_proof_cond_v_second_case}
\condv{x}{1}{\mathcal{T}} &\geq \condv{x}{1}{\mathcal{Q}}  \stackrel{\eqref{equ_def_cond_variance}}{=} (1/\samplesize) {\rm Tr} \{ {\rm cov}\{ \proccomp{1} | \{ \proccomp{r}\}_{r\in \mathcal{Q}} \}   \nonumber \\ 
& \hspace*{-10mm}\stackrel{\eqref{equ_innov_repr_comp_2}}{=}  (1/\samplesize) {\rm Tr} \{  \mathbf{L}_{1, j} \mC_{e,j}  \mathbf{L}^{T}_{1, j} +\condco{x}{1}{\mathcal{N}(1)}   \}  \\ 
& \hspace*{-10mm}\stackrel{\eqref{equ_bound_C_e_j_alpha},\eqref{equ_submatrix_L_a_b}}\geq  (1/\samplesize) \sum_{n=1}^{\samplesize} \alpha[\timeidx] \big[ \big( \mK[\timeidx] \big)_{a,b} / \big( \mK[\timeidx] \big)_{a,a} \big]^{2}+ \condv{x}{1}{\mathcal{N}(1)}   \nonumber 
\vspace*{-3mm}
\end{align} 
valid for any $\mathcal{T} \in \subsets{1}$ with $\mathcal{T} \neq \mathcal{N}(1)$.
We obtain \eqref{equ_relation_cond_v_N_r_case1} by combining \eqref{equ_proof_cond_v_second_case} with Asspt. \ref{aspt_minimum_par_cor}.

\renewcommand{\baselinestretch}{0.9}\normalsize\footnotesize

\bibliographystyle{IEEEtran}
\bibliography{/Users/ajung/work/LitAJ_ITC.bib,/Users/ajung/work/tf-zentral}

\end{document}